\documentclass{article}

\usepackage{microtype}
\usepackage{graphicx}
\usepackage{subcaption}
\usepackage{booktabs} 
\usepackage{kotex}

\usepackage{hyperref}

\usepackage{natbib}
\usepackage[preprint]{icml2026}

\usepackage{amsmath}
\usepackage{amssymb}
\usepackage{mathtools}
\usepackage{amsthm}
\usepackage{multirow} 

\usepackage[capitalize,noabbrev]{cleveref}

\theoremstyle{plain}
\newtheorem{theorem}{Theorem}[section]
\newtheorem{proposition}[theorem]{Proposition}
\newtheorem{lemma}[theorem]{Lemma}
\newtheorem{corollary}[theorem]{Corollary}
\theoremstyle{definition}

\theoremstyle{remark}

\usepackage[textsize=tiny]{todonotes}

\icmltitlerunning{MATE: Memory of Accumulated Transition Embeddings}

\begin{document}

\twocolumn[
  \icmltitle{MATE: Solving Contextual Markov Decision Processes with \\Memory of Accumulated Transition Embeddings}



\icmlsetsymbol{equal}{*}
  \begin{icmlauthorlist}
    \icmlauthor{Himchan Hwang}{snu}
    \icmlauthor{Hyeokju Jeong}{snu}
    \icmlauthor{Gene Chung}{sage}
    \icmlauthor{Seungyeon Kim}{snu}\\
    \icmlauthor{Sangwoong Yoon}{unist}
    \icmlauthor{Frank Chongwoo Park}{snu,sage}
  \end{icmlauthorlist}
  \icmlaffiliation{snu}{Seoul National University}
  \icmlaffiliation{sage}{Saige}
  \icmlaffiliation{unist}{Ulsan National Institute of Science and Technology (UNIST)}

  \icmlcorrespondingauthor{Sangwoong Yoon}{swyoon@unist.ac.kr}
  \icmlcorrespondingauthor{Frank Chongwoo Park}{ fcp@snu.ac.kr}

  \icmlkeywords{Machine Learning, ICML}

  \vskip 0.3in
]



\printAffiliationsAndNotice{}  

\begin{abstract}
    We propose MATE, a simple yet effective memory architecture for solving Contextual Markov Decision Processes (CMDPs), a family of MDPs parameterized by an unobserved context. In CMDPs, an optimal agent can adapt online by maintaining the posterior belief over contexts. MATE replaces this intractable posterior with a sum-aggregated memory, leveraging the posterior’s permutation invariance to retain provably sufficient expressiveness. Compared to prior memory architectures, MATE avoids the growing per-step rollout cost of Transformers and the gradient issues commonly associated with Recurrent Neural Networks (RNNs). Extensive evaluations across diverse benchmarks demonstrate that MATE provides clear computational advantages while achieving performance comparable to standard sequence-model baselines.
\end{abstract}

\section{Introduction}

Solving Contextual Markov Decision Processes (CMDPs) \cite{hallak2015contextual} enables RL agents to operate beyond fixed simulation environments, extending to real-world applications with inherent variations, such as sim-to-real transfer, healthcare, and autonomous driving \cite{kirk2023survey}. In these settings, environmental variations are governed by a latent context. To adapt effectively, an optimal agent must infer this context from its interaction history—a capability beyond the reach of standard state-dependent policies.

A prominent solution is memory-based RL, where sequence models are trained end-to-end to encode transition histories into a policy-accessible memory. \citet{ni2022recurrent} demonstrated the efficacy of this approach using RNNs, outperforming prior specialized methods. More recently, \citet{grigsby2024amago,grigsby2024amago2} achieved strong performance in multi-task and long-horizon domains by leveraging Transformer architectures. However, these architectures face distinct limitations: RNNs suffer from gradient instabilities \cite{bengio1994learning, pascanu2013difficulty} and preclude temporal parallelization during the update step \cite{lu2023structured}. Conversely, Transformers incur quadratic computational costs with respect to sequence length during both the update step and trajectory rollout, often necessitating a limited context window \cite{beck2025tutorial}.

In this paper, we propose a memory-based RL agent that exploits the {\it permutation invariance} of the context posterior in CMDPs (see Figure~\ref{fig:illustration}). In a CMDP, the posterior belief over the context is invariant to the ordering of transition data. We argue that an optimal memory-based RL agent should inherently incorporate this inductive bias. While bidirectional Transformers can capture such invariance, the cost of re-encoding the full history at each step is prohibitive. Thus, causal Transformers are generally utilized in memory-based RL despite lacking this property.

\begin{figure*}[!t]
  \vskip 0.2in
  \begin{center}
    \centerline{\includegraphics[width=2.0\columnwidth]{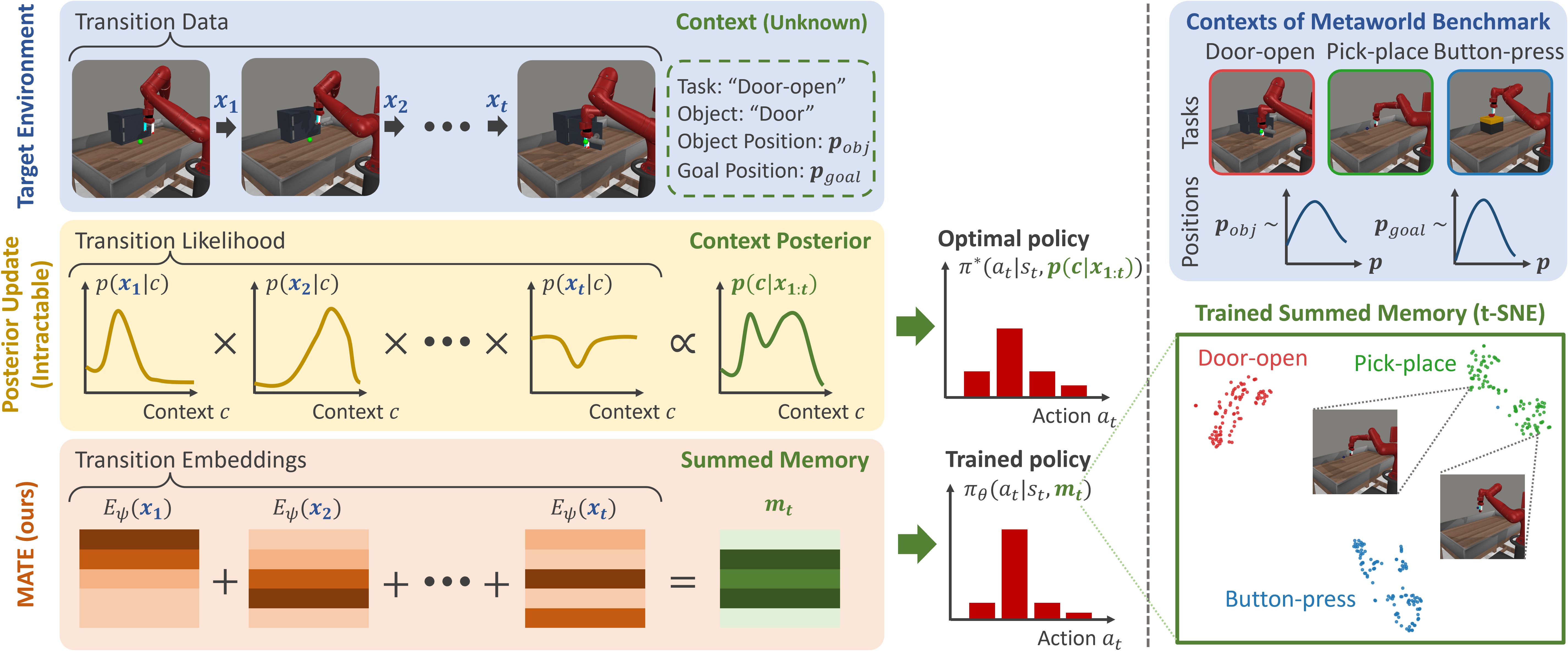}}
        \caption{
    \textbf{Overview of MATE.} MATE represents the memory $m_t$ as a summation of transition embeddings, serving as a tractable substitute for the intractable posterior $p(c|x_{1:t})$. By preserving the permutation invariance of the posterior, $\pi_{\theta}$ is provably capable of representing the optimal policy $\pi^*$, despite its structural simplicity.
    }
    \label{fig:illustration}
  \end{center}
\end{figure*}

Motivated by this perspective, we propose \textbf{MATE} (\textbf{M}emory of \textbf{A}ccumulated \textbf{T}ransition \textbf{E}mbeddings), a sum-aggregated memory architecture for solving CMDPs (see lower-left of Figure~\ref{fig:illustration}). By design, MATE guarantees permutation invariance across transition histories while avoiding the complexity of recurrent layers and attention mechanisms. Notably, MATE preserves the computational advantages of prior approaches: it enables temporal parallelization during the update step (like Transformers) while maintaining a linear computational cost with respect to sequence length (like RNNs). While \citet{beck2024splagger} employed a similar mean aggregation of linear embeddings as a baseline, we extend this idea by adopting sum aggregation over embeddings produced by arbitrary neural networks.

We validate our model from both theoretical and empirical perspectives. Theoretically, we show that an agent equipped with MATE retains sufficient expressiveness to represent the optimal policy by leveraging permutation invariance in CMDPs. Empirically, we demonstrate that MATE achieves competitive or superior performance compared to Transformer and RNN baselines across diverse benchmarks. Our results highlight its effective adaptation to variations in dynamics, rewards, and tasks (MuJoCo, Meta-World), as well as its handling of long-term memorization and credit assignment challenges (T-Maze).

\section{Preliminaries}
\subsection{Contextual MDP}\label{sec:cmdp}
A Contextual Markov Decision Process (CMDP) \citep{hallak2015contextual} is defined as a set of MDPs $\{(\mathcal{S}, \mathcal{A}, P_c) \mid c\in\mathcal{C}\}$, where $\mathcal{S}$, $\mathcal{A}$, and $\mathcal{C}$ denote the state, action, and context spaces. The \emph{unobserved} context $c$ determines the transition kernel $P_c$, given by $p(s_{t+1}, r_t |s_t, a_t, c)$. A CMDP can be viewed as a special case of a partially observable MDP (POMDP) \citep{cassandra1994acting} where the unobserved state is the context.

The goal in a CMDP is to find an optimal policy $\pi^*$ that maximizes the expected discounted return under the context prior $p(c)$:
\begin{align}
\max_{\pi} \ \mathbb{E}_{c \sim p(c), \pi}\left[\sum_{t=0}^{\infty}\gamma^t r_t \right],
\label{eq:objective}
\end{align}
where $\gamma$ is the discount factor. Thus, the optimal agent generalizes across MDPs induced by different contexts and adapts to the current MDP within an episode.

For CMDPs, the optimal policy and value function can be expressed as functions of the current state $s_t$ and the posterior belief over contexts $p(c | x_{1:t})$: 
\begin{align}
    \pi^*(a_t | s_t, p(c | x_{1:t})), \quad V^*(s_t,p(c | x_{1:t})). \label{eq:opt_policy}
\end{align}
Here, $x_t = (s_{t-1}, a_{t-1}, r_{t-1}, s_t)$ denotes the transition at time $t$. This formulation is derived by casting the CMDP as an MDP with an augmented state $s_t^+ = (s_t, p(c | x_{1:t}))$ \cite{duff2002optimal, zintgraf2019varibad}. However, exact posterior inference is typically intractable, as it requires knowledge of the unknown transition kernel $P_c$ and may not admit a finite-dimensional representation.

\subsection{Memory-based RL} 
To learn history-dependent policies as required in CMDPs (\cref{eq:opt_policy}), leveraging sequence models to compress the transition history into a latent memory has emerged as a promising approach \citep{ni2022recurrent, beck2023recurrent, grigsby2024amago, grigsby2024amago2}. In this work, we refer to this general framework as \textbf{memory-based RL}.

Memory-based RL employs a memory network $m_{\psi}$, parametrized by $\psi$, that summarizes the transition history into a finite-dimensional memory $m_t = m_{\psi}(x_{1:t}) \in \mathbb{R}^m$. This is then used to condition the memory-augmented policy and value networks:
\begin{align}
    \pi_{\theta}(a_t | s_t, m_t), \quad V_{\phi}(s_t, m_t). \label{eq:memory_arch}
\end{align}
All networks are jointly trained with RL. Existing methods vary in the critic formulation ($V_{\phi}$, $Q_{\phi}$, or none) and in how the networks are conditioned (on $(s_t,m_t)$ or on $m_t$ alone). We use \cref{eq:memory_arch} as a representative case in our analysis. 

Transformers \cite{grigsby2024amago, grigsby2024amago2,shala2025efficient} and RNNs \cite{duan2016rl,ni2022recurrent,beck2023recurrent}, or their variants \cite{beck2020amrl,beck2024splagger, huang2024decision}, are the standard choices for $m_{\psi}$. RNNs offer the advantage of constant inference costs during rollout, but their recurrent nature often leads to gradient instabilities during the update step \cite{bengio1994learning, pascanu2013difficulty}. Transformers excel at long-term memorization \cite{ni2023transformers}, but their reliance on attention mechanisms incurs linearly growing inference costs per rollout step.

\section{Methods}
In this section, we introduce \textbf{M}emory of \textbf{A}ccumulated \textbf{T}ransition \textbf{E}mbeddings (\textbf{MATE}), a simple sum-aggregated memory architecture for CMDPs that addresses the limitations of Transformers and RNNs.
We first formalize permutation invariance in CMDPs in \cref{sec:perm_inv}, which serves as the foundation of our model. We then describe the detailed architecture of MATE and its computational advantages over Transformers and RNNs in \cref{sec:mate}. In \cref{sec:theoretical_analysis}, we establish that MATE provides a theoretically grounded representation for solving CMDPs. Finally, we introduce a normalization technique in \cref{sec:hypersphere} to prevent memory divergence in practice.

\subsection{Permutation Invariance in CMDPs} \label{sec:perm_inv} 
The permutation invariance of the context posterior $p(c|x_{1:t})$ is a fundamental property in CMDPs that serves as both the primary motivation and the theoretical foundation for our architecture. We therefore explicitly formalize this property in \cref{eq:perm_inv}, noting its utilization in prior studies \citep{imagawa2022off, beck2024splagger}. By applying Bayes' rule and leveraging the conditional independence of transitions given $c$, the posterior factorizes as:
\begin{align}
p(c | x_{1:t}) \propto p(c)\prod_{i=0}^{t-1} p(s_{i+1}, r_{i} |s_{i}, a_{i}, c),
\label{eq:context_posterior}
\end{align}
assuming that $s_0$ is independent of $c$ (see \cref{appendix:factorization} for full derivation). Since the product operation in \cref{eq:context_posterior} is commutative, the posterior is invariant to the order of transitions. Thus, for any permutation $\rho:\{1,\ldots,t\}\to\{1,\ldots,t\}$,
\begin{align}
p(c | x_1, \ldots, x_t) = p(c | x_{\rho(1)}, \ldots, x_{\rho(t)}).\label{eq:perm_inv}
\end{align}

\begin{figure}[!t]
  \vskip 0.2in
  \begin{center}
    \centerline{\includegraphics[width=0.8\columnwidth]{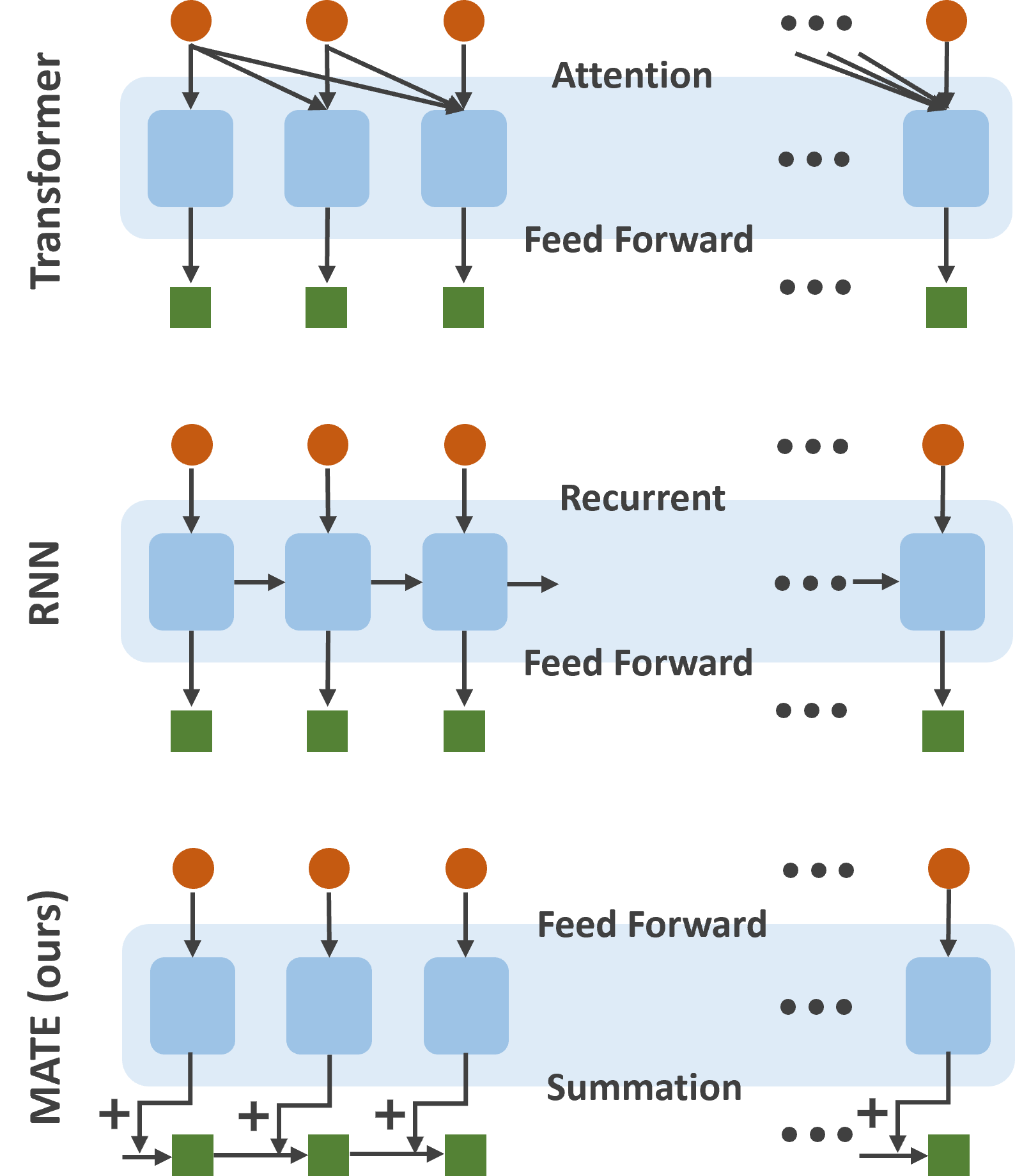}}
    \caption{
    \textbf{Comparison of memory architectures.} MATE replaces the attention mechanisms of Transformers and the recurrent connections of RNNs with simple sum aggregation. Single-layer versions are illustrated for clarity.
    }
    \label{fig:model_comparison}
  \end{center}
\end{figure}

\subsection{Memory of Accumulated Transition Embeddings}
\label{sec:mate}
A desirable memory $m_t$ for CMDPs should act as a tractable surrogate for the context posterior $p(c | x_{1:t})$, as suggested by \crefrange{eq:opt_policy}{eq:memory_arch}. Motivated by the permutation invariance of $p(c | x_{1:t})$, we introduce \textbf{M}emory of \textbf{A}ccumulated \textbf{T}ransition \textbf{E}mbeddings (\textbf{MATE}), a permutation-invariant memory defined as
\begin{align}
    m_t = \sum_{i=1}^t E_{\psi}(x_i). \label{eq:mate}
\end{align}
The transition encoder $E_{\psi}$ can be any network that maps each transition $x_i$ to an embedding. As highlighted in \cref{fig:model_comparison}, MATE summarizes the transition history through simple accumulation, unlike Transformer or RNN-based memories that utilize attention or recurrent layers.

To provide intuition for why summation can be a natural choice, consider a case where the context posterior takes the form of a product of Gaussian factors, $p(c | x_{1:t}) \propto \prod_{i=1}^t \mathcal{N}(c; \mu(x_i), \sigma^2(x_i))$, as in \citet{rakelly2019efficient}. If we set $E_{\psi}(x_i) = (\mu(x_i)/\sigma^2(x_i), 1/\sigma^2(x_i))$, then the accumulated memory $m_t$ exactly specifies $p(c | x_{1:t})$ (see \cref{appendix:permutation} for details).

MATE offers computational efficiency compared to other sequence models, as summarized in \cref{tab:model-comparison}. Specifically, MATE maintains a constant inference cost per rollout step via the update rule: $m_{t+1}=m_t+E_{\psi}(x_{t+1})$. In contrast, causal Transformers incur an $\mathcal{O}(t)$ cost per step due to the growing attention window. MATE also supports parallel update across the sequence dimension, a capability lacking in RNNs \citep{lu2023structured}. We note that while bidirectional Transformers could theoretically offer permutation invariance, they are impractical for autoregressive rollout as they typically require reprocessing the full history at each step. To the best of our knowledge, prior memory-based RL work has exclusively adopted causal Transformers.

\subsection{Theoretical Optimality of MATE}\label{sec:theoretical_analysis} MATE cannot represent arbitrary history-dependent policies, as it constrains the policy to be independent of the transition order. However, we demonstrate in this section that MATE remains a sufficiently expressive memory architecture for CMDPs by leveraging the permutation invariance of the context posterior established in \cref{sec:perm_inv}.

We first establish the necessary notation and assumptions. Recall that $x_i = (s_{i-1}, a_{i-1}, r_{i-1}, s_i)$ denotes a transition sample. We assume the space of transitions $\mathcal{X} \subset \mathbb{R}^n$ is compact, as states and actions are typically bounded real vectors in practice. Given the episode length $T$, let $\mathcal{M}_{\leq T}(\mathcal{X})$ denote the collection of all multisets $\{\{x_1,\ldots,x_t\}\}$ with elements in $\mathcal{X}$ and $t\leq T$. Crucially, the permutation invariance of $p(c|x_{1:t})$ (\cref{eq:perm_inv}) implies that the optimal policy defined in \cref{eq:opt_policy} can be viewed as a mapping $\pi^*: \mathcal{S} \times \mathcal{M}_{\leq T}(\mathcal{X})\to\mathcal{Y}$. While $\mathcal{Y}$ theoretically represents a space of probability measures, practical implementations parameterize these distributions using finite-dimensional vectors (e.g., probability simplex for discrete actions, Gaussian parameters for continuous actions). Thus, without loss of generality, we assume $\mathcal{Y} \subset \mathbb{R}^k$.

In the following proposition, we show that with sufficient memory dimension, MATE with only a one-layer MLP transition encoder suffices to represent the optimal policy.

\begin{proposition}
\label{proposition:optimality}
Let $E_{\psi}:\mathcal{X}\to\mathbb{R}^m$ be a one-layer MLP with output dimension $m=2nT+1$ and an analytic non-polynomial element-wise activation $f:\mathbb{R}\to\mathbb{R}$, defined as
\begin{align}
    E_{\psi}(x) = f(Ax + b), \ \text{where } \psi=(A,b).
\end{align}

Then, given a continuous optimal policy $\pi^*:\mathcal{S}\times\mathcal{M}_{\leq T}(\mathcal{X})\to\mathcal{Y}\subset\mathbb{R}^k$, there exists a continuous function $\pi:\mathcal{S}\times\mathbb{R}^m\to\mathbb{R}^k$ such that
\begin{align}
    \pi(s_t, \sum_{i=1}^t E_{\psi}(x_i)) = \pi^*(s_t, p(c|x_{1:t}))
\end{align}
for Lebesgue-almost-every $A\in\mathbb{R}^{m\times n}$ and $b\in\mathbb{R}^m$.
\end{proposition}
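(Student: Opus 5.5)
The plan is to show that the sum-pooled map $\Phi_\psi:\mathcal{M}_{\leq T}(\mathcal{X})\to\mathbb{R}^m$, $\{\{x_1,\ldots,x_t\}\}\mapsto\sum_{i=1}^t E_\psi(x_i)$, is injective and is a homeomorphism onto its image for almost every $\psi=(A,b)$. Once that holds, we set $\pi(s,z)=\pi^*(s,\Phi_\psi^{-1}(z))$ on $\mathcal{S}\times\Phi_\psi(\mathcal{M}_{\leq T}(\mathcal{X}))$ and extend continuously to $\mathcal{S}\times\mathbb{R}^m$ by Tietze. By the permutation invariance in \cref{eq:perm_inv}, $\pi^*$ depends on the history only through its multiset, so the identity in the statement follows.

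The first technical point is that multisets of different cardinalities must be separated. Because $m=2nT+1$ is odd, I would handle this by padding. Fix an auxiliary point, or note that the constant coordinate trick is unavailable since $E_\psi$ has no constant output. Instead, compare multisets of sizes $t\neq t'$ through the identity $\sum_i E(x_i)=\sum_j E(y_j)$, viewed as a signed measure equation $\mu=\sum\delta_{x_i}-\sum\delta_{y_j}$, with total signed mass $\le 2T$ points. So injectivity reduces to the following claim: for a.e. $(A,b)$, no nonzero signed combination $\sum_{k=1}^{2T} c_k f(Ax_k+b)=0$ with integer weights $c_k\in\{-T,\ldots,T\}$ and distinct points $x_k$ is possible. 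Here a nonzero combination means one coming from distinct multisets. Some care is needed when $\sum c_k$ is nonzero and $f$ has nonzero constant behaviour, but non-polynomiality handles this.

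The core step is a parametric transversality/dimension count, in the style of Dym--Gortler's results on generic injectivity of sum-pooled analytic features, which I would either cite or reproduce. Consider the analytic map $F(\{x_k\},c;a,\beta)=\sum_k c_k f(a\cdot x_k+\beta)$ for a single row $(a,\beta)\in\mathbb{R}^{n+1}$. The bad set is parametrized by at most $2T$ points in $\mathbb{R}^n$, i.e.\ $2nT$ parameters, for each of finitely many weight patterns. For a fixed nonzero configuration, $\{(a,\beta):F=0\}$ is a proper analytic subset of $\mathbb{R}^{n+1}$, hence of measure zero. Properness is where non-polynomiality and analyticity enter: if $\sum c_k f(a\cdot x_k+\beta)\equiv0$ in $(a,\beta)$ for distinct $x_k$, then differentiating in $a$ and using that $f^{(j)}\not\equiv0$ for all $j$ yields a contradiction. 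Using $m$ independent rows and the standard fiber-dimension argument, each row independently cuts the dimension by one generically. The set of $(A,b)$ admitting a bad configuration is then the projection of a set of dimension at most $2nT+m(n+1)-m<m(n+1)$, since $m>2nT$, and so it has measure zero. I expect this step to be the main obstacle, because making the dimension count rigorous for analytic (not algebraic) sets needs the $\sigma$-subanalytic machinery of Dym--Gortler. My first attempt would be to invoke their theorem directly, with $T$ replaced by $2T$ for the difference of two multisets.

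The final step is continuity of the inverse. $\mathcal{M}_{\leq T}(\mathcal{X})$ with the natural quotient topology is a finite union of compact spaces $\mathcal{X}^t/S_t$. $\Phi_\psi$ is continuous and injective on this compact space into a Hausdorff space, hence a homeomorphism onto its compact image. The composite $\pi$ is therefore continuous on the closed set $\mathcal{S}\times\text{image}$, assuming $\mathcal{S}$ is closed, and extends continuously componentwise by Tietze.
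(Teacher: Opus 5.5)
Your argument is correct and follows the same route as the paper: almost-everywhere injectivity of the sum-pooled map on $\mathcal{M}_{\leq T}(\mathcal{X})$, then the fact that a continuous injection from this compact space into $\mathbb{R}^m$ is a homeomorphism onto its closed image, then a componentwise Tietze extension. The paper simply cites \citet[Theorem~3.3]{amir2023neural} for injectivity, whereas you sketch its finite-witness dimension count. Your assumption that $\mathcal{S}$ is closed is not needed, because $\mathcal{S}\times\Phi_\psi(\mathcal{M}_{\leq T}(\mathcal{X}))$ is already closed in $\mathcal{S}\times\mathbb{R}^m$.

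One bookkeeping caution: if you cite the multiset-injectivity theorem directly, apply it with $T$ itself rather than $2T$. The factor $2$ in $m=2nT+1$ already accounts for the difference of two multisets, exactly as in your own count of $2nT$ configuration parameters. Substituting $2T$ would only give injectivity for $m=4nT+1$.
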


\noindent\textit{Proof sketch.}
We construct $\pi$ as the composition of mappings:
\begin{align}
(s_t, m_t) \mapsto \ (s_t, \{\{x_1, \dots, x_t\}\}) \mapsto  \pi^*(s_t, p(c| x_{1:t})).
\end{align}
The first map is well-defined because the summation $m_t = \sum E_\psi(x_i)$ is injective \citep[Theorem~3.3]{amir2023neural}, and the second is the optimal policy itself. The continuity is guaranteed by the Tietze extension theorem. See \cref{appendix:prop_1} for details.

\begin{table}[!t]
\caption{
Total computational complexity of \textbf{rollout} and \textbf{update} over the episode length $T$, and whether the update is \textbf{parallelizable} across the sequence dimension, comparing Transformers (causal), RNNs, and MATE.
}
\label{tab:model-comparison}
\begin{center}
\setlength{\tabcolsep}{8pt}
\begin{tabular}{lccc}
\toprule
\textbf{Model} & \textbf{Rollout} & \textbf{Update} & \textbf{Parallelizable} \\
\midrule
Transformers & $\mathcal{O}(T^2)$ & $\mathcal{O}(T^2)$ &\textbf{Yes} \\
RNNs                  & $\mathcal{O}(\boldsymbol{T})$ & $\mathcal{O}(\boldsymbol{T})$ &  No  \\
\textbf{MATE (Ours)}  & $\mathcal{O}(\boldsymbol{T})$ & $\mathcal{O}(\boldsymbol{T})$ & \textbf{Yes} \\
\bottomrule
\end{tabular}
\end{center}
\vskip -0.1in
\end{table}

This existence result directly extends to the optimal value function with the same choice of $E_{\psi}$,  which justifies the architectural choice of sharing the memory $m_t$ between the policy and value networks.

\begin{corollary}\label{corollary:value_function}
Under the same assumptions as \cref{proposition:optimality}, for a continuous optimal value function $V^*:\mathcal{S}\times\mathcal{M}_{\leq T}(\mathcal{X})\to\mathbb{R}$, there exists a continuous function $V:\mathcal{S}\times\mathbb{R}^m\to\mathbb{R}$ such that
\begin{align}
    V(s_t, \sum_{i=1}^t E_{\psi}(x_i)) = V^*(s_t, p(c|x_{1:t}))
\end{align}
\end{corollary}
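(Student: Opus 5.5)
The plan is to reuse the construction behind \cref{proposition:optimality}, replacing the $\mathcal{Y}$-valued policy $\pi^*$ by the real-valued $V^*$. Nothing in that construction used the particular codomain beyond its being a Euclidean space, so the argument should transfer verbatim with $k=1$.

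\textbf{Step 1: a generic encoder that is injective on multisets.} Fix $E_\psi(x)=f(Ax+b)$ with $m=2nT+1$ and analytic non-polynomial $f$. By \citep[Theorem~3.3]{amir2023neural}, for Lebesgue-almost-every $(A,b)$ the map
\[
\Phi:\mathcal{M}_{\leq T}(\mathcal{X})\to\mathbb{R}^m,\qquad \{\{x_1,\ldots,x_t\}\}\mapsto \sum_{i=1}^t E_\psi(x_i),
\]
is injective. I take this to be the same full-measure set of parameters as in \cref{proposition:optimality}, so that one encoder serves both the policy and the value function. This is the sense of ``the same choice of $E_\psi$.''

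\textbf{Step 2: factor $V^*$ through $\Phi$.} By permutation invariance (\cref{eq:perm_inv}), $V^*(s_t,p(c|x_{1:t}))$ depends on $x_{1:t}$ only through its multiset. It is therefore a well-defined map $\tilde V:\mathcal{S}\times\mathcal{M}_{\leq T}(\mathcal{X})\to\mathbb{R}$, continuous by hypothesis. On the image $K=\mathrm{id}_{\mathcal{S}}\times\Phi\bigl(\mathcal{S}\times\mathcal{M}_{\leq T}(\mathcal{X})\bigr)$ I define
\[
V_0(s,m)=\tilde V\bigl(s,\Phi^{-1}(m)\bigr).
\]
This is well defined by injectivity.

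\textbf{Step 3: continuity of $V_0$ on $K$ (the main obstacle).} I need $\Phi^{-1}$ to be continuous on its image. I would put the quotient topology on $\mathcal{M}_{\leq T}(\mathcal{X})$, viewed as the disjoint union over $t\le T$ of $\mathcal{X}^t/S_t$. Each piece is compact because $\mathcal{X}$ is compact, and it is Hausdorff. The map $\Phi$ is continuous on each piece. A continuous injection from a compact space into a Hausdorff space is a homeomorphism onto its image, so $\Phi^{-1}$ is continuous on $\Phi(\mathcal{M}_{\leq T})$, which is compact.

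Two points need care here. First, the different pieces must map to disjoint sets, which injectivity on the whole collection already gives. Second, $\mathcal{S}$ must be handled; if it is not compact, I would argue on the product directly, since the product of $\mathrm{id}_{\mathcal{S}}$ with a homeomorphism is a homeomorphism. It follows that $V_0$ is continuous on $K$, and $K$ is closed in $\mathcal{S}\times\mathbb{R}^m$ whenever $\mathcal{S}$ is closed.

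\textbf{Step 4: extend to all of $\mathcal{S}\times\mathbb{R}^m$.} Apply the Tietze extension theorem to the real-valued continuous $V_0$ on the closed set $K$ in the metric space $\mathcal{S}\times\mathbb{R}^m$. This gives a continuous $V:\mathcal{S}\times\mathbb{R}^m\to\mathbb{R}$ with $V|_K=V_0$. In particular,
\[
V\Bigl(s_t,\sum_{i=1}^t E_\psi(x_i)\Bigr)=V^*\bigl(s_t,p(c|x_{1:t})\bigr).
\]

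Apart from the closedness and compactness bookkeeping in Step 3, which I expect to mirror \cref{appendix:prop_1}, the argument is a direct specialization of the proposition.
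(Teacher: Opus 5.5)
Your proposal is correct and matches the paper: the corollary is obtained by rerunning the proof of \cref{proposition:optimality} with $\pi^*$ replaced by $V^*$ and $k=1$. That means generic injectivity of the sum encoder, the compact-to-Hausdorff homeomorphism onto $\mathcal{Z}$, and a Tietze extension, using the same full-measure set of $(A,b)$. One small simplification: your caveat that $\mathcal{S}$ be closed is unnecessary, since $\mathcal{S}\times\mathcal{Z}$ is always closed in $\mathcal{S}\times\mathbb{R}^m$ once $\mathcal{Z}$ is compact, and $\mathcal{S}\times\mathbb{R}^m$ is metrizable, so Tietze applies directly.
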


In practice, we use neural networks to approximate $\pi$ and $V$ (\cref{eq:memory_arch}). Given their continuity established in \cref{proposition:optimality} and \cref{corollary:value_function}, the universal approximation theorem \citep{cybenko1989approximation} implies that the policy and value networks equipped with MATE can approximate the optimal policy and value function to arbitrary accuracy.

\begin{figure*}[!t]
  \vskip 0.2in
  \begin{center}
    \centerline{\includegraphics[width=2.0\columnwidth]{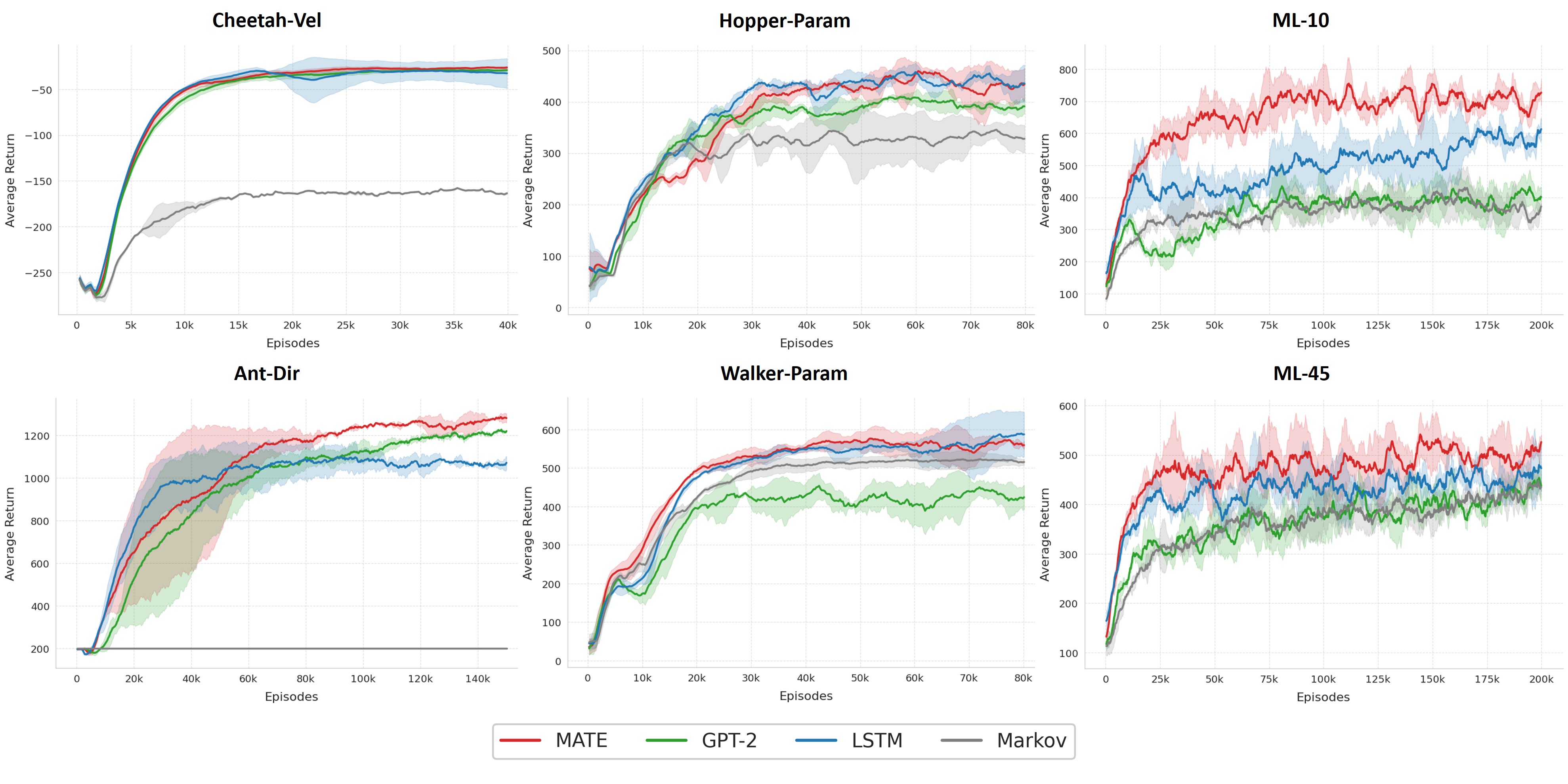}}
    \caption{
    \textbf{Learning curves on MuJoCo (Left, Center) and Meta-World (Right) benchmarks.} Tasks are ordered by increasing difficulty from top to bottom. Solid lines and shaded regions represent the mean and standard deviation across 3 random seeds. All agents are trained using Soft Actor-Critic (SAC)~\cite{haarnoja2018soft}
    }
    \label{fig:results_mujoco_and_metaworld}
  \end{center}
\end{figure*}

\subsection{Hyperspherical Memory Normalization}\label{sec:hypersphere}
In memory-based RL, the memory $m_t$ serves as an input to the policy and value networks. Here, the sum aggregation of MATE can be problematic, as the norm of $m_t$ typically grows with $t$. This unbounded growth poses challenges for neural network training, as it induces internal covariate shift and destabilizes optimization \citep{ioffe2015batch}. To address this issue, we normalize the memory by projecting it onto a hypersphere:
\begin{align}
\hat{m}_t=\frac{m_t+\Psi}{\left\lVert m_t+\Psi \right\rVert}.
\end{align}
We demonstrate that the guarantee in \cref{proposition:optimality} remains valid with a memory dimension of $m=2nT+2$ (see \cref{appendix:extension}). The offset $\Psi$ is a necessary component that prevents information loss during normalization; without it, distinct histories such as $\{\{x,x\}\}$ and $\{\{x\}\}$ would yield the same normalized memory. 

In our experiments, we treat $\Psi$ as a trainable parameter. We also scale $\hat{m}_t$ by the square root of its dimension, adopting the strategy in RMSNorm \citep{zhang2019root}.

\section{Experiments}
We evaluate MATE against Transformer and RNN baselines on three distinct CMDP benchmarks: MuJoCo, Meta-World, and T-Maze, selected to provide a comprehensive assessment of memory-based RL agents. We detail the baselines and experimental setup in \cref{sec:baseline}, followed by main results in \cref{sec:results}. Additional details are provided in \cref{appendix:exp_details}.

\subsection{Baselines and Experiment Settings}
\label{sec:baseline}
To evaluate the performance of MATE as a memory architecture, we compare it against standard RNN and Transformer baselines, specifically LSTM \cite{hochreiter1997long} and GPT-2 \cite{radford2019language}, following the \citet{ni2023transformers}. Additionally, we include a memory-free Markovian policy as a baseline for ablation. For this baseline, we report the optimal policy where available; otherwise, we train a memory-less agent from scratch.

We align the network architectures across all baselines to ensure a fair comparison. All models share identical policy and value network structures. Since GPT-2 requires an embedding layer for residual connections, we incorporate an embedding layer into all baselines for consistency. To maintain comparable model complexity, we configure both the LSTM and GPT-2 baselines with a single layer. Similarly, MATE utilizes a single feed-forward block adapted from the GPT-2 architecture. The memory dimension is unified across all models. Further implementation details are provided in \cref{appendix:baselines}.

\subsection{Benchmarks}
\label{sec:baseline_benchmark}
\textbf{MuJoCo.} \quad 
The MuJoCo benchmark is a widely adopted suite in Meta-RL literature \citep[e.g.,][]{finn2017maml, rakelly2019efficient, li2024towards}, comprising four variants of standard MuJoCo locomotion environments \citep{todorov2012mujoco}. These environments are specifically designed to assess an agent’s adaptability to variations in either the \emph{reward function} (Ant-Dir, Cheetah-Vel) or \emph{system dynamics} (Walker-Param, Hopper-Param). Among them, Ant-Dir and Walker-Param are considered more challenging benchmarks due to their higher degrees of freedom. We set the episode length to $T=200$ for all tasks. Detailed configurations are provided in \cref{appendix:mujoco}.

\textbf{Meta-World.} \quad 
Meta-World benchmark \citep{yu2020meta} features diverse robot manipulation tasks with varying objects and goals that share a unified state and action space. In our experiments, we utilize the ML10 benchmark, comprising 10 distinct tasks (e.g., \emph{door-open}, \emph{pick-place}). Although ML10 includes out-of-distribution test tasks, we focus on evaluating adaptability to the variation within the training tasks. Since task descriptions are not provided, the agent must infer the underlying task solely from the interaction history. The episode length is set to $T=300$. Detailed configurations are provided in \cref{appendix:metaworld}.

\textbf{T-Maze.} \quad
T-Maze \cite{ni2023transformers} is a 2-dimensional navigation task where an agent must acquire goal information from an oracle state to successfully reach the target. We employ two variants, \textit{Passive} and \textit{Active}, which differ in the oracle's configuration to assess memory retention and credit assignment capabilities, respectively. While the original T-Maze is formulated as a POMDP, we reformulate its state representation and reward structure to align with the CMDP framework. We evaluate MATE and baselines across corridor lengths ranging from 200 to 600, with a fixed budget of 80,000 training episodes. Detailed configurations are provided in \cref{appendix:tmaze_setting}.


\begin{figure}[t]
  \vskip 0.2in
  \begin{center}
    \centerline{\includegraphics[width=0.95\columnwidth]{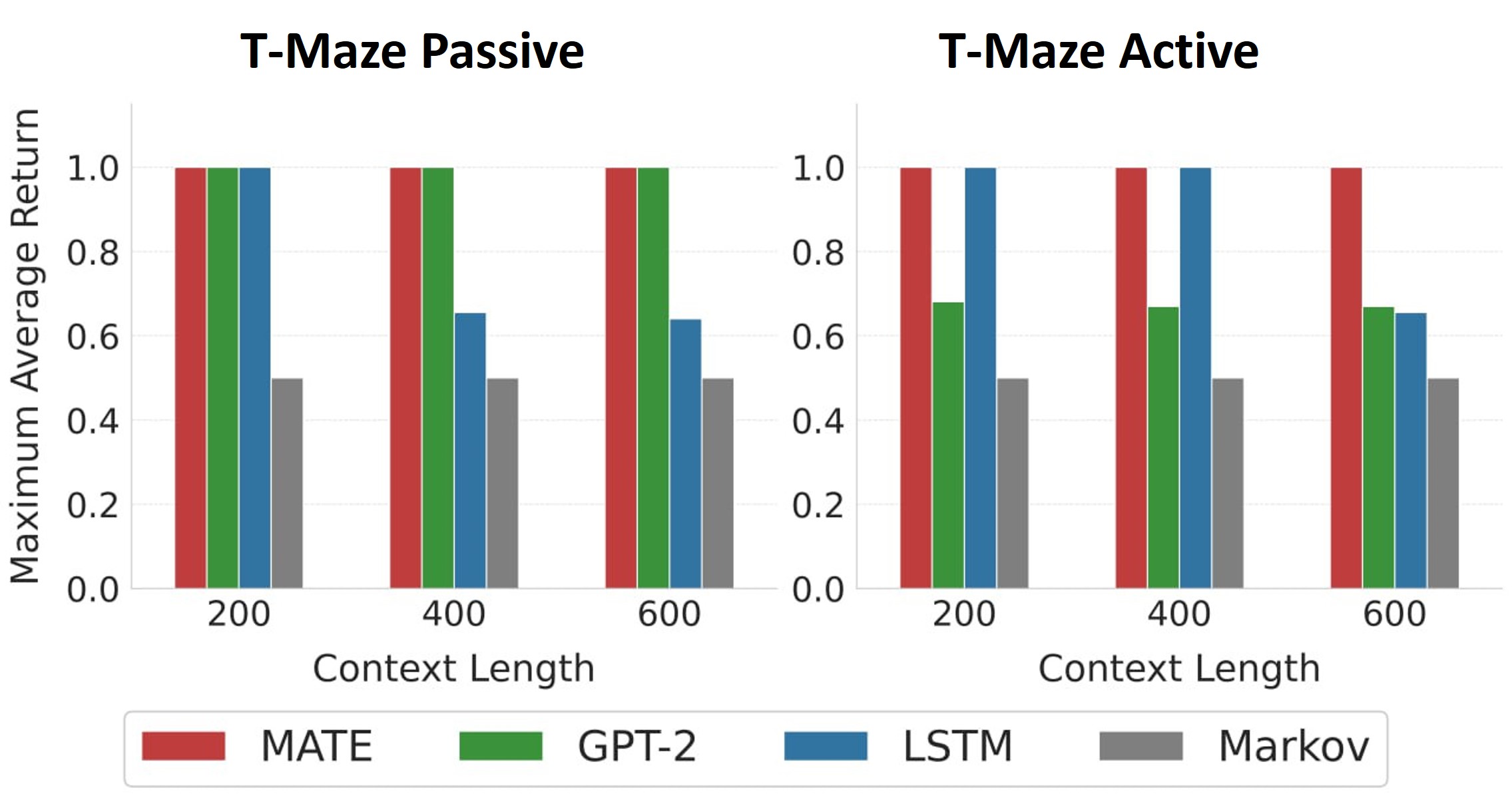}}
    \caption{
    \textbf{Performance comparison on the T-Maze tasks.} The figures show the maximum average return on  the (Left) Passive T-Maze and (Right) Active T-Maze environments. The reported values are the best results from 2 independent runs. All agents are trained using Double Q-Learning (DDQN) \cite{van2016deep}.
    }
    \label{fig:results_tmaze}
  \end{center}
\end{figure} 

\subsection{Results}
\label{sec:results}
\paragraph{MuJoCo\& Meta-World}\quad 
Figure~\ref{fig:results_mujoco_and_metaworld} shows learning curves on MuJoCo (left) and Meta-World (right), with tasks ordered by increasing difficulty. On MuJoCo, MATE performs on par with a diverse set of memory and sequence-based baselines (e.g., recurrent and
Transformer-style variants), achieving similar learning speed and final performance. On Meta-World, MATE consistently outperforms these baselines, attaining higher returns and improved sample efficiency, despite its lightweight architecture. Notably, MATE can be viewed as a simplified Transformer that removes positional encoding and attention and instead uses permutation-invariant summation, suggesting that sum-based memory updates are particularly effective for learning from transition data.

\paragraph{T-Maze} \quad
Experimental results for the T-Maze tasks are presented in \cref{fig:results_tmaze}. In the Passive T-Maze, both MATE and GPT-2 achieve optimal returns across all context lengths, whereas LSTM shows a decrease in performance as the sequence length grows. In the Active T-Maze, MATE maintains its performance, while GPT-2 shows lower returns compared to the passive task. LSTM shows comparable performance with MATE up to a context length of 400 but exhibits a decline at the length of 600. Overall, MATE demonstrates the most consistent performance across both tasks and varying context lengths.



\section{Related Work}
\label{sec:related}

\subsection{Meta-Reinforcement Learning}

Meta-RL studies how agents can adapt across a distribution of tasks from interaction history \cite{beck2025tutorial}. A common distinction is between \emph{task-inference} methods, which explicitly estimate a latent task or context variable, and \emph{black-box} methods, which train a history-dependent policy end-to-end \citep{duan2016rl, rakelly2019efficient, zintgraf2019varibad, ni2022recurrent, beck2023recurrent}. Task-inference methods such as PEARL, VariBAD, and ELUE introduce dedicated objectives for context estimation, often decoupling inference from control \citep{rakelly2019efficient, zintgraf2019varibad, imagawa2022off, li2024towards}. This separation can improve training stability and interpretability, but it typically requires auxiliary losses, additional decoders, or parametric assumptions on the posterior.

Black-box methods instead compress interaction history directly into memory and optimize the entire agent through the RL objective. From RL$^2$ to more recent recurrent and Transformer agents, this line has become a strong baseline for CMDPs and related POMDP settings \citep{duan2016rl, ni2022recurrent, beck2023recurrent, grigsby2024amago, grigsby2024amago2}. Its main limitation, however, is that the memory is usually implemented with order-sensitive sequence models, even though the CMDP context posterior is permutation-invariant with respect to transitions. MATE follows the black-box formulation, but incorporates this inference-side prior directly into the architecture: it retains end-to-end RL training while replacing order-sensitive memory with a permutation-invariant representation of history.

\subsection{Memory Architectures for RL}
\label{sec:related-memory}

Memory in RL is typically implemented with recurrent networks or causal attention. Recurrent agents provide constant per-step inference cost, but can be difficult to optimize over long horizons and do not support temporal parallelization during training \citep{duan2016rl, ni2022recurrent, beck2023recurrent, bengio1994learning, pascanu2013difficulty, lu2023structured}. Transformer-based agents substantially improve long-range memory, but their cost grows with context length during both training and rollout \citep{ni2023transformers, grigsby2024amago, grigsby2024amago2, beck2025tutorial}. More recent alternatives, including structured state-space models, Mamba-style architectures, and efficient Transformer variants, aim to improve this trade-off while remaining order-sensitive sequence models \citep{lu2023structured, gu2024mamba, beaussant2025scaling, elawady2024relic, shala2025efficient}.

MATE is motivated by a complementary perspective: in CMDPs, the key statistic for adaptation is permutation-invariant, so this invariance can be built into the memory itself. In this sense, MATE reduces complexity not by designing a better order-sensitive sequence model, but by removing the need for one when the problem structure allows it.

\subsection{Permutation-Invariant Aggregation}
\label{sec:related-permutation}

The most closely related works are those that summarize history with permutation-invariant aggregation. In black-box RL, AMRL and especially SplAgger are the closest precedents: they show that invariant aggregation is useful in end-to-end meta-RL, but retain a permutation-variant recurrent branch \citep{beck2020amrl, beck2024splagger}. In task-inference methods, PEARL and ELUE also exploit permutation-invariant context encoders; among them, ELUE is particularly close in its use of Deep-Sets-style sum aggregation, though within an explicit latent-inference pipeline trained with auxiliary objectives \citep{rakelly2019efficient, imagawa2022off}.

A related idea appears in IIDA, which aggregates unordered transition tuples for dynamics adaptation through an auxiliary prediction objective rather than end-to-end RL \citep{evans2022context}.

MATE differs from these lines in a simple but important way: permutation-invariant aggregation is the agent's \emph{only} memory mechanism. There is no recurrent backbone, no attention module, and no separate inference objective. This design is supported by the Deep Sets perspective \citep{zaheer2017deep} and by our sufficiency analysis, which shows that sum-aggregated transition embeddings can retain enough information to represent the optimal CMDP policy. Compared with prior invariant architectures, MATE thus offers a cleaner black-box formulation tailored to the structure of CMDPs.

\section{Conclusion}
We propose MATE, a simple yet effective memory architecture for agents solving CMDPs. By adopting a sum-aggregated architecture, we design a permutation-invariant memory model that effectively infers appropriate contextual representations of unknown environments from transition histories. Unlike prior approaches such as RNNs and Transformers, our model offers computational advantages during rollout and training, thereby improving training efficiency. Beyond its efficiency, we theoretically validate that our model does not compromise the expressiveness of the agent, in the sense that a globally optimal policy can be represented within our formulation.
We demonstrate that MATE achieves promising performance on several CMDP benchmarks, including MuJoCo, Meta-World and T-Maze, suggesting that our model effectively identifies relevant contextual information during rollout.

A promising avenue for future research lies in integrating our architecture with deep, self-supervised RL frameworks. Recent studies indicate that combining deep networks with self-supervised losses can yield substantial performance gains \citep{wang20251000}. Extending MATE to such high-capacity models could further enhance its scalability and robustness in complex environments.


\section*{Impact Statement}
This paper advances the robustness and adaptability of AI systems by proposing an effective model for reinforcement learning in the presence of unobserved contextual information, a common setting in real-world applications. We do not anticipate any direct negative impacts arising from this work. \looseness=-1

\bibliography{references}
\bibliographystyle{icml2026}

\newpage
\appendix
\onecolumn
\section{Proofs and Derivations}

\subsection{Factorization of the Context Posterior}
\label{appendix:factorization}
Based on the CMDP assumptions: (i) the transition kernel satisfy the Markov property conditioned on $c$, and (ii) the policy depends solely on the interaction history and not directly on the unobserved context. Consequently, the trajectory distribution given $c$ factorizes as:
\begin{align}
p(x_{1:t}|c) &= p(s_0)\pi(a_0|s_0)p(s_{1}, r_0|s_0,a_0,c)\dots \pi(a_{t-1}|x_{1:t-1})p(s_{t},r_{t-1}|s_{t-1},a_{t-1},c) \\
&= p(s_0)\prod_{i=0}^{t-1}\pi(a_i|x_{1:i})p(s_{i+1}, r_{i}|s_i, a_i, c),
\end{align}
where we assume that $s_0$ is independent of $c$.
By applying Bayes' rule and neglecting terms that are independent of $c$ (i.e., $p(s_0)$ and $\pi(a_i|x_{1:i})$), we arrive at the desired property:
\begin{align}
p(c | x_{1:t}) \propto p(c)p(x_{1:t}| c) \propto p(c)\prod_{i=0}^{t-1} p(s_{i+1},r_i| s_i,a_i,c).
\end{align}

\subsection{Probabilistic Interpretation of Sum Aggregation}
\label{appendix:permutation}
To intuitively understand why sum aggregation can be a natural choice,  consider a scenario where the context posterior $p(c | x_{1:t})$ is modeled as a product of Gaussian factors, following \citet{rakelly2019efficient}. Under the assumption of a uniform prior $p(c)$, the posterior distribution takes the form:
\begin{equation}
p(c | x_{1:t}) \propto \prod_{i=1}^t \mathcal{N}(c; \mu(x_i) , \sigma^2(x_i)) \propto \prod_{i=1}^t \exp\left(-\frac{(c-\mu(x_i))^2}{2\sigma^2(x_i)}\right) \propto \exp\left(-\frac{(c-\mu_{post})^2}{2\sigma_{post}^2}\right).
\end{equation}
Here, the posterior precision $1/\sigma_{post}^2$ and the precision-weighted mean $\mu_{post}/\sigma_{post}^2$ are determined by the sum of individual precisions and precision-weighted means:
\begin{equation}
\frac{1}{\sigma_{post}^2} = \sum_{i=1}^t \frac{1}{\sigma^2(x_i)}, \quad \frac{\mu_{post}}{\sigma_{post}^2} = \sum_{i=1}^t \frac{\mu(x_i)}{\sigma^2(x_i)}.
\end{equation}
If we define the transition embedder $E_{\psi}(x_i)$ as:
\begin{equation}
E_{\psi}(x_i) = \left(\frac{\mu(x_i)}{\sigma^2(x_i)}, \frac{1}{\sigma^2(x_i)} \right),
\end{equation}
then the accumulated memory $m_t$ becomes
\begin{equation}
m_t = \sum_{i=1}^t E_{\psi}(x_i) = \left( \frac{\mu_{post}}{\sigma_{post}^2}, \frac{1}{\sigma_{post}^2} \right).
\end{equation}
Therefore, $m_t$ exactly encodes the sufficient statistics of the posterior $p(c | x_{1:t})$.

\subsection{Proof of \cref{proposition:optimality}}
\label{appendix:prop_1}
Before presenting the main proof, we first establish a technical lemma regarding the topological properties of the multiset space. This lemma guarantees the compactness of the domain and the continuity of the summation function, which are prerequisites for establishing the homeomorphism and applying the extension theorem.

\begin{lemma}\label{lemma:lemma_1}
Let $\mathcal{X}$ be a compact subset of $\mathbb{R}^n$, and $E: \mathbb{R}^n \to \mathbb{R}^m$ be a continuous function. Then, the space of bounded multisets $\mathcal{M}_{\leq T}(\mathcal{X})$, endowed with the disjoint union topology, is compact. Furthermore, the summation function $\hat{E}: \mathcal{M}_{\leq T}(\mathcal{X}) \to \mathbb{R}^m$ defined by $\hat{E}(\{\{x_1, \dots, x_t\}\}) = \sum_{i=1}^t E(x_i)$ is continuous.
\end{lemma}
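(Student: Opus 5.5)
The plan is to realize $\mathcal{M}_{\leq T}(\mathcal{X})$ as a finite disjoint union of quotient spaces,
\begin{align}
\mathcal{M}_{\leq T}(\mathcal{X}) = \bigsqcup_{t=0}^{T} \mathcal{X}^t / S_t ,
\end{align}
where the symmetric group $S_t$ acts on $\mathcal{X}^t$ by permuting coordinates. Each piece carries the quotient topology induced by the projection $q_t:\mathcal{X}^t\to\mathcal{X}^t/S_t$, and the whole space carries the disjoint union topology. The $t=0$ piece is the single empty multiset, whose sum is $0$. This identification is just the definition of a multiset of size $t$ as an equivalence class of $t$-tuples modulo reordering. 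Once it is fixed, both claims reduce to standard facts about quotients and finite disjoint unions.

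For compactness, I first note that $\mathcal{X}^t$ is compact as a finite product of compact sets (Tychonoff, or simply Heine--Borel in $\mathbb{R}^{nt}$). The quotient map $q_t$ is continuous and surjective, so $\mathcal{X}^t/S_t = q_t(\mathcal{X}^t)$ is compact. A finite disjoint union of compact spaces is compact: any open cover restricts to an open cover of each of the $T+1$ pieces, and the finitely many finite subcovers together give a finite subcover. I will also remark that each quotient is Hausdorff, since $S_t$ is a finite group acting by homeomorphisms and distinct orbits are finite disjoint compact sets, which can be separated by $S_t$-invariant open neighborhoods. This is not needed for the lemma, but it is what the subsequent homeomorphism argument will use.

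For continuity, a map out of a disjoint union is continuous iff its restriction to each piece is continuous. A map out of a quotient $\mathcal{X}^t/S_t$ is continuous iff its composition with $q_t$ is continuous. So it suffices to check that
\begin{align}
F_t:\mathcal{X}^t\to\mathbb{R}^m,\qquad F_t(x_1,\dots,x_t)=\sum_{i=1}^t E(x_i)
\end{align}
is continuous and $S_t$-invariant. Continuity holds because $F_t$ is a finite sum of the continuous maps $E\circ \mathrm{pr}_i$, where $\mathrm{pr}_i$ is the $i$-th coordinate projection. Invariance holds because addition is commutative. Then $F_t$ factors uniquely through $q_t$ as $\hat{E}|_{\mathcal{X}^t/S_t}$, which is continuous by the universal property of the quotient topology.

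The only real obstacle is bookkeeping rather than analysis: the lemma must be stated with the topology made explicit. "Disjoint union topology" presupposes that each stratum $\mathcal{X}^t/S_t$ has the quotient topology. I will state this convention at the outset, since otherwise compactness of the domain, and later the application of the Tietze extension theorem, would be ill-posed.
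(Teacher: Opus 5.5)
Your proposal is correct and follows the paper's proof essentially verbatim: each stratum is compact as the continuous image of $\mathcal{X}^t$ under the quotient map, the finite disjoint union is therefore compact, and $\hat{E}$ is continuous by the universal properties of the quotient and disjoint-union topologies applied to the continuous, permutation-invariant map $(x_1,\dots,x_t)\mapsto\sum_i E(x_i)$. One small correction to your aside: Hausdorffness of the quotient is true but is not needed downstream either, because a continuous injection from a compact space into the Hausdorff space $\mathbb{R}^m$ is already a homeomorphism onto its image.
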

\begin{proof}
First, we establish the compactness of $\mathcal{M}_{\leq T}(\mathcal{X})$. Since $\mathcal{X}$ is compact, the product space $\mathcal{X}^t$ is compact for any $t \leq T$. The component $\mathcal{M}_t(\mathcal{X})$ is the image of $\mathcal{X}^t$ under the continuous canonical quotient map, hence it is compact. Consequently, the finite disjoint union $\mathcal{M}_{\leq T}(\mathcal{X}) = \bigsqcup_{t=0}^T \mathcal{M}_t(\mathcal{X})$ is compact.

Next, regarding the continuity of $\hat{E}$, observe that the map $(x_1, \dots, x_t) \mapsto \sum_{i=1}^t E(x_i)$ defined on $\mathcal{X}^t$ is continuous and permutation-invariant. By the definition of the quotient topology, this map induces a continuous restriction of $\hat{E}$ on each component $\mathcal{M}_t(\mathcal{X})$. Since $\hat{E}$ is continuous on each component of the disjoint union, it is continuous on the entire domain.
\end{proof}

\textbf{Proposition 3.1} \textit{
Let $E_{\psi}:\mathcal{X}\to\mathbb{R}^m$ be a one-layer MLP with output dimension $m=2nT+1$ and an analytic non-polynomial element-wise activation $f:\mathbb{R}\to\mathbb{R}$, defined as
\begin{align}
    E_{\psi}(x) = f(Ax + b), \ \text{where } \psi=(A,b).
\end{align}
Then, given a continuous optimal policy $\pi^*:\mathcal{S}\times\mathcal{M}_{\leq T}(\mathcal{X})\to\mathcal{Y}\subset\mathbb{R}^k$, there exists a continuous function $\pi:\mathcal{S}\times\mathbb{R}^m\to\mathbb{R}^k$ such that
\begin{align}
    \pi(s_t, \sum_{i=1}^t E_{\psi}(x_i)) = \pi^*(s_t, p(c|x_{1:t}))
\end{align}
for Lebesgue-almost-every $A\in\mathbb{R}^{m\times n}$ and $b\in\mathbb{R}^m$.
}

\begin{proof}
Let $\hat{E}: \mathcal{M}_{\leq T}(\mathcal{X})\to \mathbb{R}^m$ be the summation function defined as $\hat{E}(\{\{x_1,...,x_t\}\}) = \sum_{i=1}^t E_{\psi}(x_i)$. According to \citep[Theorem~3.3]{amir2023neural}, assuming $m = 2nT+1$, $\hat{E}$ is injective for Lebesgue-almost-every parameters $A$ and $b$. From \cref{lemma:lemma_1}, the domain $\mathcal{M}_{\leq T}(\mathcal{X})$ is compact and $\hat{E}$ is continuous. Since a continuous bijection from a compact space to a Hausdorff space is a homeomorphism onto its image, $\hat{E}$ is a homeomorphism between $\mathcal{M}_{\leq T}(\mathcal{X})$ and its image $\mathcal{Z} \coloneqq \hat{E}(\mathcal{M}_{\leq T}(\mathcal{X})) \subset \mathbb{R}^m$. This implies that the inverse map $\hat{E}^{-1}: \mathcal{Z} \to \mathcal{M}_{\leq T}(\mathcal{X})$ is well-defined and continuous.

Now, consider the function $g: \mathcal{S} \times \mathcal{Z} \to \mathcal{Y}$ defined by
\begin{align}
    g(s_t, z) = \pi^*(s_t, p(c|x_{1:t})), \ \text{where } \{\{x_1,...,x_t\}\} = \hat{E}^{-1}(z).
\end{align}
The continuity of $g$ follows directly from the continuity of $\pi^*$ and $\hat{E}^{-1}$. Note that $\mathcal{Z}$ is a compact, and thus closed, subset of $\mathbb{R}^m$. By the Tietze extension theorem applied component-wise to the output space $\mathbb{R}^k$ (recall $\mathcal{Y} \subset \mathbb{R}^k$), there exists a continuous extension $\pi: \mathcal{S} \times \mathbb{R}^m \to \mathbb{R}^k$ such that $\pi(s_t, z) = g(s_t, z)$ for all $z \in \mathcal{Z}$. For such $\pi$, we have
\begin{align}
    \pi(s_t, \sum_{i=1}^t E_{\psi}(x_i)) = g(s_t, \sum_{i=1}^t E_{\psi}(x_i)) =  \pi^*(s_t, p(c|x_{1:t})), 
\end{align}
which verifies the claim.
\end{proof}

\subsection{Extension of \cref{proposition:optimality} to Normalized Memory ($m=2nT+2$)}\label{appendix:extension}
To extend the guarantee of \cref{proposition:optimality} to the normalized setting, consider the original memory dimension $m_0 = 2nT+1$ and let $\tilde E_\psi:\mathcal X\to\mathbb R^{m_0}$ be the encoder defined in \cref{proposition:optimality}. We construct an augmented encoder $E_\psi$ and an offset $\Psi$ in $\mathbb{R}^{m_0+1}$ as follows:
\begin{align}
E_\psi(x) = (\tilde E_\psi(x), 0), \quad \Psi = (\mathbf{0}_{m_0}, 1).
\end{align}
Let $\tilde{m}_t = \sum_{i=1}^t \tilde E_\psi(x_i)$ denote the unnormalized memory in the original space. The memory vector before normalization then takes the form:
\begin{align}
\sum_{i=1}^t E_\psi(x_i) + \Psi  = (\tilde{m}_t, 1).
\end{align}
Consequently, the normalized memory $\hat m_t$ is obtained by projecting this vector onto the unit hypersphere:
\begin{align}
\hat m_t = \frac{(\tilde{m}_t, 1)}{\lVert (\tilde{m}_t, 1) \rVert}.
\end{align}
Observing that the normalization constant $1/{\lVert (\tilde{m}_t, 1) \rVert}$ corresponds to the last coordinate of $\hat m_t$, we can explicitly recover $\tilde{m}_t$ from $\hat{m}_t $via:
\begin{align}
\tilde{m}_t = \frac{\hat m_{t, 1:m_0}}{\hat m_{t, m_0+1}},
\end{align}
where $\hat m_{t, k}$ denotes the $k$-th component of $\hat m_t$. This continuous one-to-one correspondence between $\hat m_t$ and $\tilde{m}_t$ implies that the existence of a continuous function $\pi(s_t, \tilde{m}_t)$, established in \cref{proposition:optimality}, extends to the normalized memory formulation $\pi(s_t, \hat{m}_t)$.

\section{Experimental Details}
\label{appendix:exp_details}
This appendix provides detailed descriptions of the experimental setup that complement the main paper. We describe (i) the implementation of the memory-based RL framework, (ii) architectural design choices for memory modules and baselines, (iii) RL algorithm configurations, and (iv) environment specifications. Our implementation follows the sequence-based memory RL formulation of \cite{ni2023transformers}, with modifications tailored to our proposed memory architecture.

\subsection{Memory-Based RL Implementation}


Following the memory-based RL baseline described in \cite{ni2023transformers}, we implement a sequence-based agent that summarizes an episode into a compact memory representation and uses it as input to standard off-policy RL components. Concretely, we represent each episode as a sequence and feed the episode trajectory (observation--action history and rewards) into a sequence model to produce a latent memory vector $m_t$. This memory embedding is then consumed by the RL heads to compute the corresponding training objectives. Figure~\ref{fig:memory-baselines} illustrates the overall structure and how the learned memory connects the sequence model to the RL heads.

For training, we do not construct updates from a single transition alone; instead, we sample mini-batches consisting of \emph{entire episodes} from the replay buffer and train on episode batches. This training scheme directly follows the sequence-based baseline design in \cite{ni2023transformers}: learning a history summary from episode sequences and using it to perform off-policy RL updates. This design allows the memory module to be trained purely from the RL objective, without auxiliary sequence-prediction losses.

\begin{figure}[t]
    \centering
    \begin{subfigure}[t]{0.49\linewidth}
        \centering
        \includegraphics[width=\linewidth]{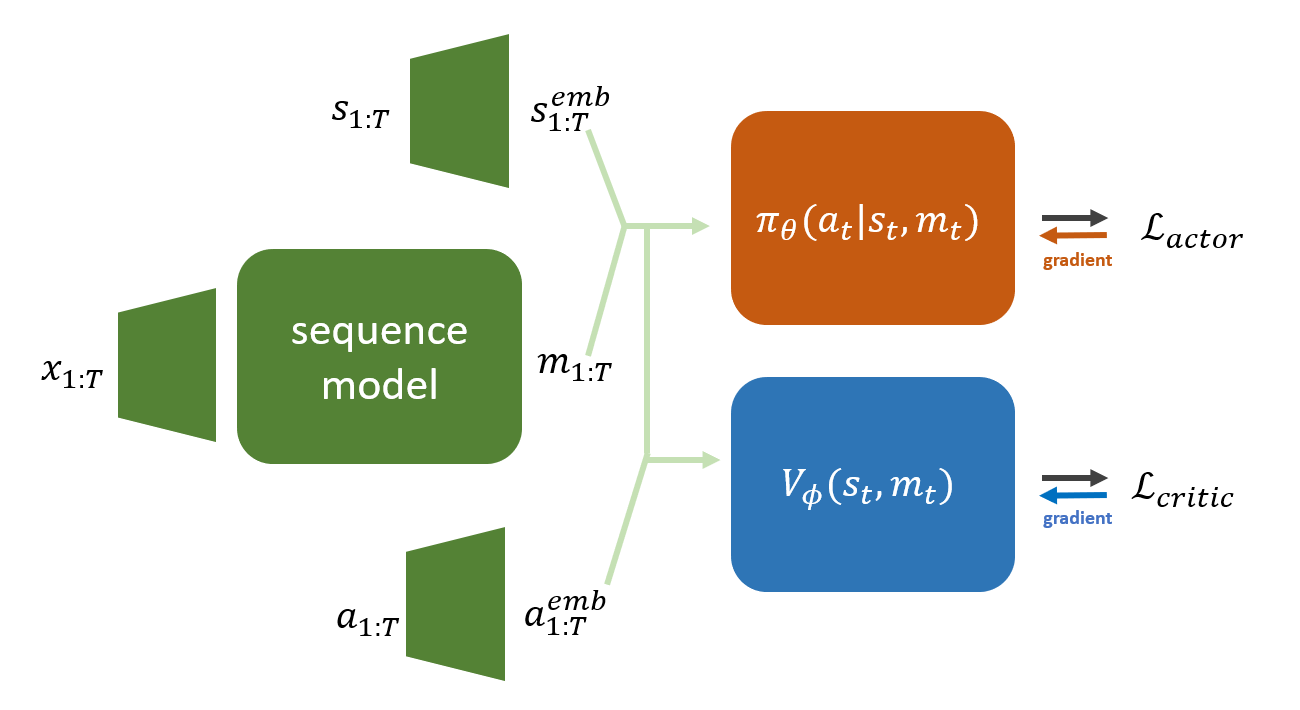}
        \caption{SAC-based memory agent for MuJoCo and Meta-World.}
        \label{fig:sac-mem}
    \end{subfigure}
    \begin{subfigure}[t]{0.49\linewidth}
        \centering
        \includegraphics[width=\linewidth]{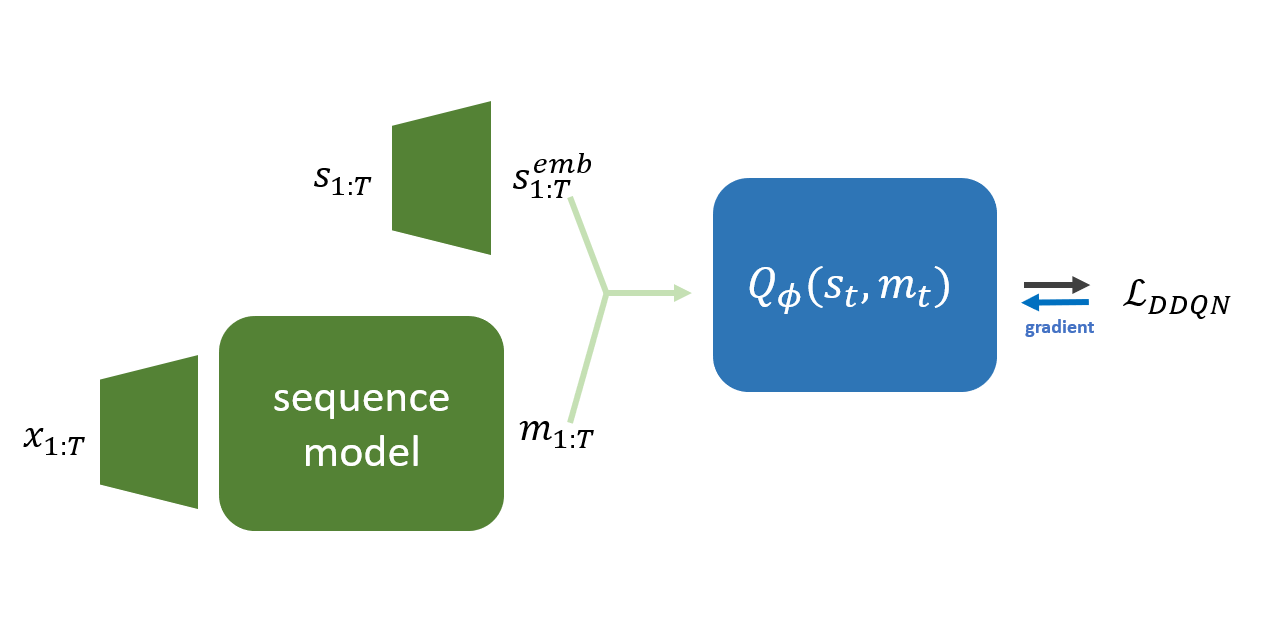}
        \caption{DDQN-based memory agent for T-Maze.}
        \label{fig:ddqn-mem}
    \end{subfigure}\hfill
    \caption{\textbf{Memory-based RL framework used in our experiments.} An episode trajectory consisting of observation, action, and reward history is processed by a sequence encoder to produce a latent memory representation $m_t$ that summarizes past interactions. This memory vector conditions standard off-policy RL components. For discrete-control tasks (e.g., T-Maze), $m_t$ is provided to a DDQN head to estimate action values. For continuous-control tasks (MuJoCo and Meta-World), $m_t$ is shared by the policy (actor) and twin Q-networks (critics) in a SAC framework. The sequence encoder, policy, and value networks are trained end-to-end, allowing historical information to influence action selection and value estimation through the learned memory representation.}
    \label{fig:memory-baselines}
\end{figure}

\subsection{Memory architecture baselines}
\label{appendix:baselines}
We compare MATE against Transformer and RNN memory baselines under closely matched capacity. Following \cite{ni2023transformers}, the Transformer baseline adopts a GPT-2 style architecture, and the RNN baseline uses an LSTM. To ensure that performance differences stem from memory structure rather than model scale, we match hidden dimensionality and depth across models. The transition encoder in MATE uses a residual feedforward design with a comparable parameter count to the Transformer blocks. Unlike GPT-style models, MATE is permutation-invariant and therefore does not rely on positional encodings. Additionally, to maintain consistency with MATE’s hyperspherical memory representation, we apply the same hyperspherical projection to state embeddings across all models when applicable.

For the experimental setup, we utilized a single layer of unit block for all baselines and MATE. The hidden dimension was set to 256 for Mujoco and Meta-World benchmarks and 128 for T-Maze experiments.

\subsection{RL algorithm Details}
For MuJoCo and Meta-World experiments, we used SAC \cite{haarnoja2018soft} to train the memory-based agent. The actor and twin critics operate on the same memory embedding $m_t=f_{\psi}(\tau_{0:t})$, and SAC is optimized with entropy regularization and target critics (using the minimum of the two critics to reduce overestimation). In particular, the actor objective can be written as
\begin{equation}
\mathcal{L}_{\pi}(\phi,\psi,\theta)
=
\mathbb{E}\Big[
\alpha \log \pi_{\phi}(a\mid m_t)
-\min_{i\in\{1,2\}} Q_{\theta_i}(m_t,a)
\Big],
\label{eq:sac_actor}
\end{equation}
where $\phi$ denotes the actor parameters and $\theta=\{\theta_1,\theta_2\}$ denotes the twin-critic parameters. When the encoder is shared, gradients of $\mathcal{L}_{\pi}$ can flow not only to $\phi$ but also to the critic and encoder parameters through the critic term $\min_i Q_{\theta_i}(m_t,a)$, which can undesirably couple the actor update with the critic (and the shared representation). To mitigate this, we enable an optional \texttt{freeze\_critic} mechanism during the actor update and instead optimize the stop-gradient variant
\begin{equation}
\mathcal{L}_{\pi}^{\text{freeze}}(\phi)
=
\mathbb{E}\Big[
\alpha \log \pi_{\phi}(a\mid \mathrm{sg}(m_t))
-\min_{i\in\{1,2\}} Q_{\bar{\theta}_i}(\mathrm{sg}(m_t),a)
\Big],
\label{eq:sac_actor_freeze}
\end{equation}
where $\mathrm{sg}(\cdot)$ denotes stop-gradient (detach) and $\bar{\theta}$ indicates that critic parameters are treated as fixed in the actor step. This enforces $\nabla_{\theta}\mathcal{L}_{\pi}^{\text{freeze}}=0$ and $\nabla_{\psi}\mathcal{L}_{\pi}^{\text{freeze}}=0$, so the critic and shared encoder are updated only via the TD critic loss, while the actor is optimized using the critic as a fixed evaluator. For a more detailed discussion and derivations of this shared-memory training issue, we refer readers to \cite{ni2023transformers}. The SAC hyperparameters for both Meta-World and Mujoco experiments are reported in Table~\ref{tab:sac_hyperparameter}.

For T-Maze experiments we adopted DDQN \cite{van2016deep} as the underlying off-policy RL algorithm, following the setup in \cite{ni2023transformers}. We employed an $\epsilon$-greedy exploration strategy with a linear decay schedule. Specifically, $\epsilon$ is initialized at 1.0 and linearly annealed to $\frac{1}{T}$ (where $T$ denotes the horizon length) over the first 10\% of total training episodes, after which it remains constant at $\frac{1}{T}$. The remaining hyperparameters are detailed in \cref{tab:ddqn_hyperparameter}.
\begin{table}[htbp]
    \centering
    \begin{minipage}{0.49\textwidth}
    \centering
    \caption{\textbf{SAC algorithm hyperparameters}}
    \begin{tabular}{cc}
    \toprule
      Hyperparameter & Value \\
    \midrule
    Network hidden size & (512, 512) \\
    Discount factor ($\gamma$) & 0.99 \\
    Target update rate ($\tau$) & 0.001 \\
    Replay buffer size & 1e4 \\
    Learning rate & 1e-4 \\
    Batch size & 64\\
    Freeze critic & true \\
    \bottomrule
    \end{tabular}
    
    \label{tab:sac_hyperparameter}
    \end{minipage}
    \hfill
    \centering
    \begin{minipage}{0.49\textwidth}
    \centering
    \caption{\textbf{DDQN algorithm hyperparameters}}
    \begin{tabular}{cc}
    \toprule
      Hyperparameter & Value \\
    \midrule
    Network hidden size & (256, 256) \\
    Discount factor ($\gamma$) & 0.99 \\
    Target update rate ($\tau$) & 0.001 \\
    Replay buffer size & 1e4 \\
    Learning rate & 3e-5 \\
    Batch size & 64 \\
    Gradient Clipping & 0.03 \\
    \bottomrule
    \end{tabular}
    
    \label{tab:ddqn_hyperparameter}
    \end{minipage}

\end{table}

\subsection{Environment Details}
\subsubsection{MuJoCo Benchmarks}
\label{appendix:mujoco}
\paragraph{Ant-Dir.}
Ant is a 3D quadruped robot with 8 torque-controlled joints. Each episode samples a latent context
$c=d^\star$ specifying a desired locomotion direction in the plane, which is not provided to the agent;
thus the optimal behavior is to infer $d^\star$ from interaction and adapt the gait to move consistently
along the target direction. We use the standard Gymnasium MuJoCo observation (proprioceptive positions,
velocities, and contact-related signals; see the Gymnasium documentation for details), with torque action
$a_t\in\mathbb{R}^8$. The reward is goal-conditioned on $d^\star$ and encourages planar velocity projected
onto $d^\star$, together with a standard control penalty. \cite{finn2017maml}

\paragraph{Cheetah-Vel.}
HalfCheetah is a planar (2D) running robot with 6 torque-controlled joints. Each episode samples a latent
context $c=v^\star$ corresponding to a target forward speed, which is not observed by the agent; the
optimal behavior is therefore to infer $v^\star$ online and regulate the gait to match the target speed.
We use the standard Gymnasium MuJoCo observation (joint positions and velocities; see the Gymnasium
documentation for details), with torque action $a_t\in\mathbb{R}^6$. The reward is conditioned on $v^\star$
and encourages tracking the achieved forward velocity to the target, with a standard control penalty.
\cite{finn2017maml}

\paragraph{Walker-Param.}
Walker2d is a planar (2D) biped robot with 6 torque-controlled joints. Each episode samples a latent
context $c$ that parameterizes the environment dynamics (e.g., masses/inertias, joint damping, and
friction/contact coefficients) and is not revealed to the agent; the optimal behavior is to identify the
current dynamics from interaction history and adapt control accordingly (rather than executing a single
fixed gait). We use the standard Gymnasium MuJoCo observation (proprioceptive positions and velocities;
see the Gymnasium documentation for details), with torque action $a_t\in\mathbb{R}^6$. The reward is the
standard forward-locomotion objective with a control penalty and is fixed across contexts.

\paragraph{Hopper-Param.}
Hopper is a planar (2D) one-legged robot with 3 torque-controlled joints. Each episode samples a latent
context $c$ that changes the environment dynamics through randomized physical parameters (e.g., mass/inertia
scalings, damping, and friction/contact coefficients) and is not observed by the agent; the optimal behavior
is to infer the episode's dynamics and adapt balance and thrust timing to maintain stable hopping under the
current dynamics. We use the standard Gymnasium MuJoCo observation (proprioceptive positions and velocities;
see the Gymnasium documentation for details), with torque action $a_t\in\mathbb{R}^3$. The reward is the
standard forward-hopping objective with a control penalty and is fixed across contexts.

\subsubsection{Meta-World}
\label{appendix:metaworld}
\paragraph{Meta-World ML10.}
We additionally evaluate on Meta-World ML10 \cite{yu2020meta}. Unlike the MuJoCo CMDP tasks and T-Maze described above, this setting does not correspond to a fixed task per environment. Instead, ML10 defines a single manipulation environment distribution in which both the task type and task-specific configurations vary across episodes. At the beginning of each episode, one of 10 manipulation tasks (\emph{reach, push, pick-place, sweep-into, basketball, drawer-close, window-open, button-press, dial-turn,} and \emph{peg-insert-side}) is sampled, and the task identity is not provided to the agent. In addition to the change in task type, object initial positions, goal locations, and relevant target configurations are randomized within each task, so both high-level objectives and low-level scene layouts vary across episodes. All tasks share the same embodiment: a simulated 7-DoF Sawyer arm with a parallel gripper operating in a common tabletop workspace. We use the standard Meta-World low-dimensional state observations, including robot proprioceptive features, end-effector pose, and task-relevant object and goal information, and a continuous 4-dimensional action space corresponding to 3D end-effector displacement control and a gripper open/close command. We use a fixed episode horizon of $H=300$ steps for all ML10 experiments. This formulation requires the agent to infer the underlying task and adapt its behavior from interaction history within a unified environment family rather than across separate task-specific environments.

\subsubsection{T-Maze}
\label{appendix:tmaze_setting}
We adopt and modify the T-Maze environment originally introduced by \cite{ni2023transformers}. The environment is situated in a discrete 2-dimensional space containing a T-shape structure. It consists of a 1-dimensional corridor of length $L$, extending from an oracle state $O$ to a junction state $J$. At the junction, the path branches into two directions (Up and Down), leading to two potential goal states, $G_{1}$ and $G_{2}$.

At the beginning of each episode, the true goal $G$ is randomly initialized as either $G_{1}$ or $G_{2}$. Crucially, the context information regarding the true goal is observable only at the oracle state $O$. The agent is initialized at a start position $S$ within the corridor. To solve the task, the agent must traverse the corridor, identify the correct goal by visiting $O$ (if necessary), and proceed to $J$ to turn toward the target. The action space consists of four discrete movements: Left, Right Up, and Down. 

\paragraph{Adaptation to CMDP Framework.} The original T-Maze is designed as a POMDP, where the agent receives discrete indicator observation about current state. To align this environment with the CMDP framework, where the state is typically fully observable while the context is hidden, we introduce two key modifications. First, regarding state representation, we define the state as the 2-dimensional coordinates of the agent instead of discrete indicators. Second, regarding reward shaping, we replace the original time-dependent penalty with a velocity-based penalty to provide a time-independent learning signal. The total reward consists of a sparse goal reward $R_{goal} = \mathbb{1}(s_{t+1}=G)$ and penalty $R_{P}$. The penalty is formulated as:
\begin{align*}
    R_{p}(h_{1:t}, a_{t}) = \frac{1- (x_{t}-x_{t-1})}{L}
\end{align*}
where $x_{t}$ denotes the agent's position along the corridor at time $t$. This term drives the agent towards the junction.

\paragraph{Passive.} In the \textit{Passive} setting, the start position coincides with the oracle position ($S=O$), and the corridor length is $L=T-1$. Since the agent observes the goal cue immediately at the start, no exploration is required. The optimal policy simply moves right for $T-1$ steps and move toward the goal $G$. The expected return for the optimal policy is 1.0, while an optimal Markovian policy yields 0.5. The worst-case return is -1.0.

\paragraph{Active.} In the \textit{Active} setting, the oracle is located one step to the left of the start position, and the corridor length is $L=T-2$. This necessitates active information gathering: the optimal policy must move left to $O$, observe the context, and then proceed right. To test active exploration, we introduce two constraints: distractor noise, where the agent receives a false goal cue at the start state, and physical constraint, where movement towards the junction is blocked at the first time step. These modifications force the agent to actively seek the oracle rather than relying on initial observations. Despite these challenges, the expected returns for optimal, Markovian policies remain consistent with the passive variant.

\end{document}